\documentclass[letterpaper, 10 pt, conference]{ieeeconf}  

\IEEEoverridecommandlockouts                              

\overrideIEEEmargins                                      

\usepackage{graphics} 
\usepackage{epsfig} 
\usepackage{times} 
\usepackage{amsmath} 
\usepackage{amssymb}  
\usepackage{fixmath}
\usepackage{multirow}
\usepackage[noadjust]{cite}
\usepackage{booktabs} 
\usepackage[colorlinks=true, urlcolor=blue]{hyperref} 


\usepackage{amsthm}
\newtheorem{theorem}{Theorem}
\newtheorem{definition}{Definition}
\newtheorem{proposition}{Proposition}
\newtheorem{corollary}{Corollary}

\newtheorem{remark}{Remark}

\usepackage{enumitem}
\usepackage[linesnumbered, ruled, vlined]{algorithm2e}
\usepackage{xcolor}

\SetCommentSty{mycommfont}
\usepackage[font=footnotesize]{caption}
\usepackage[capitalize]{cleveref}
\crefformat{equation}{(#2#1#3)}
\Crefformat{equation}{Equation~(#2#1#3)}
\Crefname{equation}{Equation}{Eqs.}

\setlength{\textfloatsep}{6pt} 

\title{\LARGE \bf
STITCHER: \\ Real-Time Trajectory Planning with Motion Primitive Search
}

\author{Helene J. Levy and Brett T. Lopez
\thanks{Authors are with the VECTR Laboratory, University of California, Los Angeles, Los Angeles, CA, USA. {\tt\small \{hjlevy, btlopez\}@ucla.edu}}%
}


\begin{document}

\maketitle
\thispagestyle{empty}
\pagestyle{empty}
\setlength{\parskip}{0pt}
\addtolength{\topmargin}{0.05in}  

\begin{abstract}
Autonomous high-speed navigation through large, complex environments requires real-time generation of agile trajectories that are dynamically feasible, collision-free, and satisfy constraints.
Most modern trajectory planning techniques rely on numerical optimization because high-quality, expressive trajectories that satisfy constraints can be systematically computed.
However, strict requirements on computation time and the risk of numerical instability can limit the use of optimization-based planners in safety-critical situations.
This work presents an optimization-free planning framework called STITCHER that leverages graph search to generate long-range trajectories by stitching short trajectory segments together in real time. 
STITCHER is shown to outperform modern optimization-based planners through its innovative planning architecture and several algorithmic developments that make real-time planning possible.
Simulation results show safe trajectories through complex environments can be generated in milliseconds that cover tens of meters.
\end{abstract}

\vspace{0.1in}
\noindent \textbf{\small Code:} \href{https://github.com/vectr-ucla/stitcher}{\small https://github.com/vectr-ucla/stitcher}

\section{INTRODUCTION}
Planning collision-free and dynamically feasible trajectories through complex environments in real-time is a necessary capability for many autonomous systems.
As a result, trajectory planning has received considerable attention from the research community, but achieving the reliability and computational efficiency required for real-world, safety-critical applications remains a challenge. 
In particular, few methods have guarantees in terms of trajectory optimality and time/memory complexity without sacrificing trajectory expressiveness, length, or computation time. 
Our proposed approach addresses this gap by combining optimal control theory with graph search to generate near-optimal trajectories over long distances in real-time without online optimization.

Numerical optimization has emerged as the principal approach for trajectory design in autonomous systems. This is because it allows for natural expression of performance index and constraints, and high-quality solutions for complex problems.  
Continuous variable methods employ gradient descent to jointly optimize the coefficients of basis functions and waypoint arrival times \cite{Mellinger11:Minimum, Richter16:Polynomial,Oleynikova16:Continuous-time, Zhou2021:RAPTOR, Wang22:Geometrically, Ren22:Bubble}, while mixed-integer variable methods utilize integer variables to impose collision constraints along a discretized trajectory \cite{Deits15:Computing, Deits15:Efficient, Tordesillas22:FASTER, Marcucci23:Motion}.
Despite their popularity, optimization methods lack time complexity bounds that can be known \textit{a priori}, and can scale poorly with trajectory length, especially if integer variables are used.
Numerical stability can also be a problem with these methods.
\begin{figure}[t!]
  \centering
  \includegraphics[width=0.95\columnwidth]{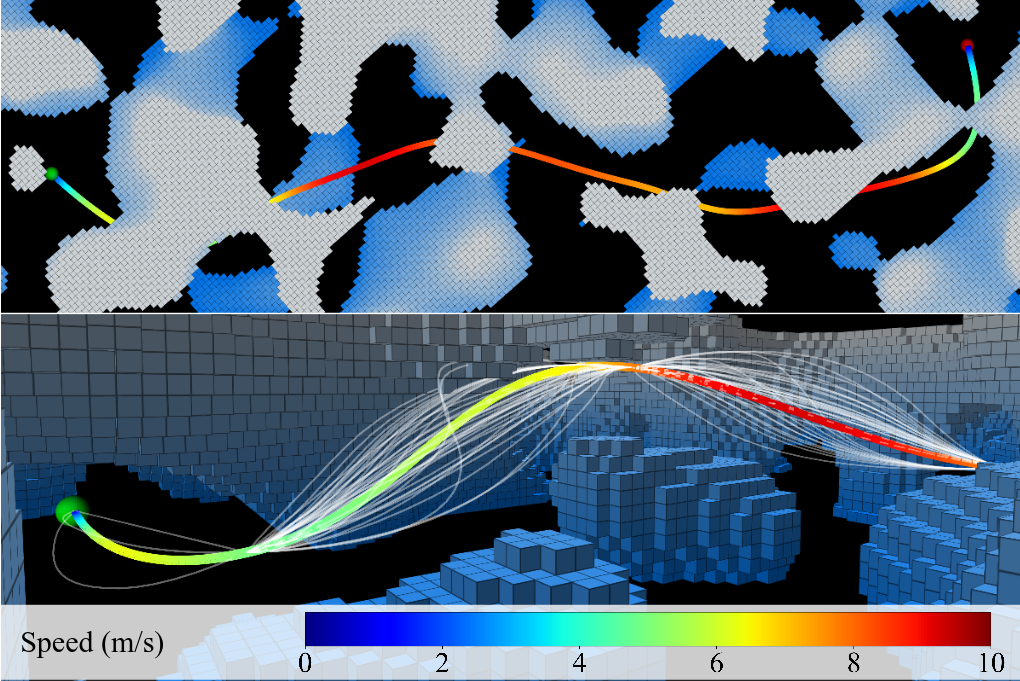}
  \caption{A trajectory (colored based on speed) generated by our proposed algorithm called STITCHER through a Perlin Noise environment. STITCHER searches over candidates motion primitives (white) to find a safe trajectory in real-time with time and memory complexity guarantees.}
  \label{fig:first_pic}
\end{figure}
A computationally efficient alternative is to continuously replan with a library of short-duration trajectories, i.e., motion primitives, that can be efficiently evaluated \cite{Mueller15:A_Computationally, Florence16:Integrated, Lopez17:Aggressive, Ryll19:Efficient}.
However, this framework can introduce myopic or suboptimal behavior that is exacerbated in large or complex environments. 
Subsequent work has attempted to pose the problem as a graph search with nodes and edges being desired states and motion primitives \cite{Liu17:Search_Based, Liu18:Search_Based, Zhou19:Robust, Foehn21:Alphapilot}.
Although this enables long-range trajectories, search times can be extremely high, and it is non-trivial to design an admissible heuristic that expedites the search.

In this work, we introduce a new trajectory planning algorithm called STITCHER, which enables real-time motion primitive search over long distances in complex environments.
STITCHER utilizes a novel three-stage planning architecture to generate expressive trajectories by \emph{stitching} motion primitives together.
Specifically, given a set of waypoints computed in the first stage, we create a velocity graph by sampling velocities at each waypoint, and employ dynamic programming to compute the cost-to-go for each node in the graph. 
The cost-to-go is then used as a heuristic to efficiently guide the motion primitive search in the third stage.
We also propose a greedy graph pre-processing step to form a compact motion primitive search graph.
We prove all graphs are finite, and that the proposed search heuristic is admissible.
These properties guarantee i) \emph{a priori} time and memory complexity bounds and ii) trajectory optimality with respect to the graph discretization set.
To further reduce computation time, we improve the collision checking procedure from \cite{Lopez17:Aggressive} by leveraging the known free space from previous trajectory evaluations, bypassing the rigidity and computational complexity of free space decomposition. 
Additionally, we show that employing a simple sampling procedure in the final search stage is effective at pruning trajectory candidates that violate complex state or actuator constraints.
STITCHER was extensively tested in two simulation environments, and compared with two state-of-the-art real-time optimization-based planners \cite{Tordesillas22:FASTER,Wang22:Geometrically}.
Results show that STITCHER consistently generates trajectories faster with comparable trajectory execution times.

\section{RELATED WORKS}
Optimization-based trajectory planners can be categorized using several criteria, but the clearest delineation is whether the method uses continuous or integer variables.
For methods that use only continuous variables, the work by \cite{Richter16:Polynomial} reformulated \cite{Mellinger11:Minimum} to jointly optimize over polynomial endpoint derivatives and arrival times for a trajectory passing through waypoints. 
Collisions were handled by re-optimizing the trajectory with added waypoints. 
Oleynikova et al.~\cite{Oleynikova16:Continuous-time} represented obstacles using an Euclidean Signed Distance Field (ESDF) which was incorporated into a nonconvex solver as a soft constraint.
Zhou et al.~\cite{Zhou2021:RAPTOR} used a similar penalty-based method but introduced a topological path search to escape local minima. 
An alternative approach is to decompose the occupied or free space into convex polyhedra \cite{Mellinger12:Mixed-integer, Deits15:Computing, Liu17:Planning} which can be easily incorporated as constraints in an optimization. 
The methods in \cite{Wang22:Geometrically, Ren22:Bubble} treat these constraints as soft while efficiently optimizing over polynomial trajectory segments that must pass near waypoints. 
One can also use the free-space polyhedra to formulate a mixed-integer program \cite{Deits15:Efficient, Tordesillas22:FASTER, Marcucci23:Motion} to bypass the nonconvexity introduced by having unknown waypoint arrival times, but at the expense of poor scalability with trajectory length and number of polyhedra.

Motion primitive planners have been proposed as an alternative to optimization-based planners to address computation and numerical instability concerns. 
Initial work on motion primitives for quadrotors leveraged differential flatness and known optimal control solutions to efficiently compute point-to-point trajectories \cite{Hehn11:Quadrocopter,Mueller15:A_Computationally}.
Later work employed motion primitives for receding horizon collision avoidance where  motion primitives were generated online by sampling desired end states, and selected based on safety and trajectory cost \cite{Florence16:Integrated,Lopez17:Aggressive, Lopez17:AggressiveFOV, Ryll19:Efficient, Dharmadhikari20:Motion}.
While computationally efficient, the behavior of these planners can be myopic, leading to suboptimal behavior in complex environments.
One way to address this behavior is to extend standard search-based algorithms, which typically use discrete action sets, to a lattice of motion primitives \cite{Dolgov10:Path,Pivtoraiko11:Kinodynamic,Liu17:Search_Based, Liu18:Search_Based, Zhou19:Robust, Andersson18:Receding}. 
The main issue with search-based motion primitive planners is the search space can become untractable, and with the non-triviality of constructing an admissible search heuristic, search times are not suited for real-time use. 
Recently, \cite{Foehn21:Alphapilot, Penicka22:Minimum, Romero22:Model} proposed an efficient waypoint-constrained minimum-time motion primitive search in velocity space using a double integrator model. 
The search is real-time but the resulting bang-bang acceleration profile is dynamically infeasible for aerial vehicles.
A final smoothing step, e.g., model predictive contouring control, is required to achieve sufficient trajectory smoothness \cite{Romero22:Model, Krinner24:MPCC++}.

\section{PROBLEM FORMULATION} 
In this work, we are concerned with solving the following trajectory planning problem 
\begin{align}
    \min_{\mathbold{u} \, \in \, \mathcal{U}} \quad &J = r(T) +  \int_{0}^{T} q(\mathbf{x},\mathbold{u}) \,  dt \label{eq:tpp}
 \\
    \text{s.t.} \quad &\dot{\mathbf{x}} = A\mathbf{x}+B\mathbold{u} \notag \\
    & \mathbf{x} \in \mathcal{X}_s, \ \mathbf{x} \notin \mathcal{X}_{obst},  \ \mathbold{u} \in \mathcal{U} \notag \\
    &\mathbf{x}(0) = \mathbf{x}_0, \  \mathbf{x}(T) = \mathbf{x}_f, \notag
\end{align}
where $\mathbf{x}\in \mathbb{R}^n$ is the state that must satisfy state $\mathcal{X}_s$ and obstacle (collision) $\mathcal{X}_{obst}$ constraints, $\mathbold{u} \in \mathbb{R}^m$ is the control input that must satisfy actuator constraints $\mathcal{U}$, $A \in \mathbb{R}^{n\times n}$ and $B\in \mathbb{R}^{n\times m}$ govern the system's dynamics, and $r : \mathbb{R}_+ \rightarrow \mathbb{R}_+$ and $q:\mathbb{R}^n \times \mathbb{R}^m \rightarrow \mathbb{R}_+$ are the terminal and stage cost, respectively. 
The goal is to find an optimal final time $T^*$ and feasible optimal state trajectory $\mathbf{x}^*(t)$ with a corresponding control input $\mathbold{u}^*(t)$ for $t \in [0,\, T^*]$ that steers the system from an initial state $\mathbf{x}_0$ to a desired final state $\mathbf{x}_f$ that minimizes the cost functional $J$.
While the dynamics are linear in \cref{eq:tpp}, differentially flat nonlinear systems, e.g., quadrotors, can be represented as a linear system with a state vector $\mathbf{x} = [\mathbold{r}, \,  {\mathbold{v}}, \, {\mathbold{a}}, \, \dots, \, \mathbold{r}^{(p\!-\!1)} ]^\top$ and control input $\mathbold{u} = \mathbold{r}^{(p)}$ where $\mathbold{r} = (x,\,y,\,z)^\top$ is the vehicle's position in some reference frame.

\subsection{Background: Motion Primitives}
We define motion primitives to be closed-form solutions to certain optimal control problems. 
In this work, we will restrict our attention to the following two optimal control problems. 
The formulations will be presented for a single axis, but can be repeated for all three position coordinates.

\begin{figure*}[t]
  \centering
  \includegraphics[width=\textwidth]{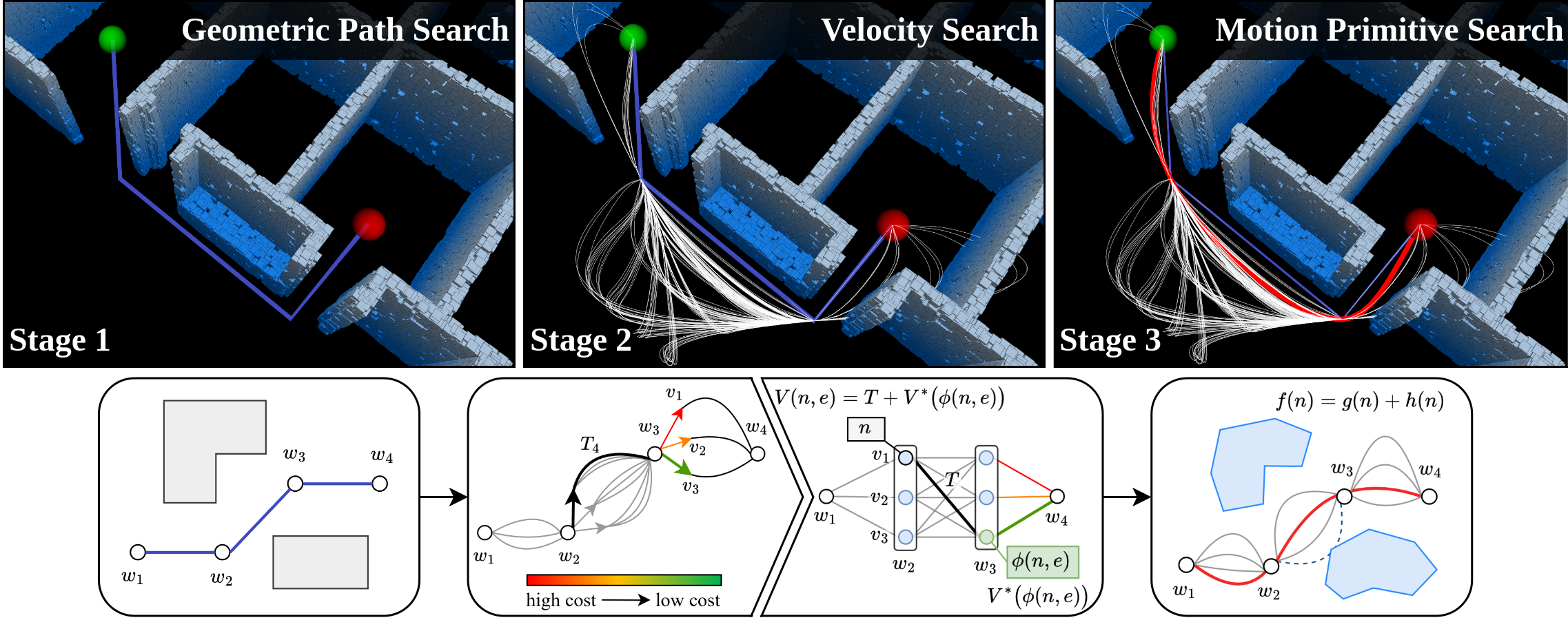}
  \caption{System architecture describing the three planning stages. In Stage 1, a sparse geometric path is found via A* search on a voxel occupancy grid. In Stage 2, a velocity state is introduced at each waypoint and dynamic programming is used to recursively solve for the cost-to-go at each node. In Stage 3, a full motion primitive search with collision/constraint checking informed by the cost-to-go computed in Stage 2 is performed to yield the final trajectory.}
  \label{fig:full_sys_arch}
\vskip -0.2in    
\end{figure*}

\textit{Minimum-Time Double Integrator:} Given an initial state $({s}_0, \, {{v}}_0) \in \mathbb{R}^2$ and desired final state $({s}_f, \, {{v}}_f) \in \mathbb{R}^2$, the minimum-time double integrator optimal control problem is 
\begin{align}
\min_{{u}} \quad & J = T \label{eq:double_min_time}
 \\
\text{s.t.} \quad &\ddot{{s}} = {u}, ~ |{u}|  \leq u_{max} \notag \\
& {s}(0) = {s}_0, \ {{v}}(0) = {{v}}_0 \notag \\
& {{s}}(T) = {{s}}_f,  \ {{v}}(T) = {{v}}_f, \notag
\end{align}
where the final time $T$ is free. 
The problem is known to have a bang-bang control profile detailed in \cite{Kirk04:Optimal}.
The control input switching times, which fully characterizes the solution, can be efficiently computed by solving a quadratic equation. 

\textit{Linear Quadratic Minimum-Time $p$-th Order Integrator:} 
Smooth trajectories can be generated by solving the linear quadratic minimum-time (LQMT) optimal control problem,
\begin{align}
    \min_{T,\,{u}} \quad & J = \rho \, T + \int_{0}^{T} {u}^2 \, dt \label{eq:triple_min_effort} \\
    \text{s.t.} \quad &{{s}}^{(p)} = {u} \notag \\
    & {{s}}(0) = {{s}}_0, \, {{v}}(0) = {{v}}_0, \dots, \, s^{(p\!-\!1)}(0) = s^{(p\!-\!1)}_0 \notag \\
    & {{s}}(T) = {{s}}_f, \, {{v}}(T) = {{v}}_f, \, s^{(k\!-\!1)}(T) \ \text{free for $3 \leq k \leq p$} \notag
\end{align}
where $\rho>1$ penalizes the final time. The final time $T$ and all terminal states except position and velocity are free. 
The final time can be found with a root-finding method 
such as QR algorithm \cite{Demmel97:Applied} 
because the cost functional can be expressed as a polynomial in terms of $T$ and the known boundary conditions, detailed in \cite{verriest2002linear,Liu18:Search_Based}.
State constraints are omitted from \cref{eq:triple_min_effort} as it is more efficient to prune many candidate trajectories once the final time is known, as discussed in \cref{sec:pruning}.

\section{METHODOLOGY}
STITCHER generates a full-state trajectory by {stitching} collision-free, dynamically feasible trajectory segments together through graph search.
At its core, STITCHER searches over closed-form solutions, i.e., motion primitives, to optimal control problems like those discussed above.
These solutions serve as a basis for the solution space to \cref{eq:tpp}.
To achieve real-time performance, STITCHER utilizes a three stage planning process where the final motion primitive search is guided by two other planners run sequentially (see \cref{fig:full_sys_arch}). 
In Stage 1 (left), A* algorithm is used to produce a sparse geometric path, i.e., waypoints, in the free space of the environment.
In Stage 2 (middle), nodes representing sampled velocities at the waypoints are formed into a velocity graph where dynamic programming is used to compute the minimum time path between nodes using a control-constrained double integrator model. 
This step is critical for constructing an admissible heuristic to guide the full motion primitive search, and is one of the key innovations that enables real-time performance. 
It is important to note that the optimal ``path" in velocity space is never used; computing the cost-to-go is the primary objective as it serves as an admissible heuristic for motion primitive search as shown in \cref{sec:DP}.
In Stage 3 (right), an A* search is performed over motion primitives using a higher-order dynamical model and the heuristic from Stage 2.
At this stage, position and all higher-order derivatives are considered, yielding a full state trajectory that can be tracked by the system.
Collisions and other state and control input constraints are also checked in this stage.
The remainder of this section expands upon each component of STITCHER.

\subsection{Stage 1: Forward Geometric Path Search}
STITCHER requires a sequence of waypoints that guides the motion primitive search by limiting the size of the search space. 
This can be done by generating a collision-free geometric path (see \cref{fig:full_sys_arch} left) through the environment with A* search or any other discrete graph search algorithm where the environment is represented as a 3D voxel occupancy grid.
Let the collision-free, geometric path generated by a discrete graph search algorithm be composed of points $\mathcal{O}= \{\mathbold{o}_1,\mathbold{o}_2, ..., \mathbold{o}_H\}$ where $\mathbold{o}_i \in \mathbb{R}^3$. 
The set of points $\mathcal{O}$ is further pruned to create a sparse set of waypoints $\mathcal{W} = \{\mathbold{w}_1, \mathbold{w}_2, ..., \mathbold{w}_N\}$ where $N \leq H$ and $\mathbold{w}_i \in \mathbb{R}^3$. 
Sparsification is done by finding the minimal set of points in $\mathcal{O}$ that can be connected with collision-free line segments.

\subsection{Stage 2: Backward Velocity Search} \label{sec:phase_2_vel_graph}
The ordered waypoint set $\mathcal{W}$ found in Stage 1 only provides a collision-free geometric path through the environment.
In other words, the velocity, acceleration, and higher-order states necessary for tracking control are not specified.
We propose creating a velocity graph (see \cref{fig:full_sys_arch} middle) where each node in the graph is defined by a position and velocity.
The positions are restricted to waypoint locations and $M$ velocities are sampled at each waypoint. 
More explicitly, for each waypoint $\mathbold{w}_i \in \mathcal{W}$, we sample a set of velocities $\mathcal{V} = \{\mathbold{v}_1, ..., \mathbold{v}_M\}$, where $\mathcal{V}$ is composed of candidate velocity magnitudes $\mathcal{V}_m$ and directions $\mathcal{V}_d$. 
With the ordered waypoint $\mathcal{W}$ and sampled velocity $\mathcal{V}$ sets, we create a velocity graph $\mathcal{G} = (\mathcal{N}, \mathcal{E})$, where node $n \in \mathcal{N}$ is a given position and sampled velocity, i.e., $n = (\mathbold{w}_i,\,\mathbold{v}_j)$ with $\mathbold{w}_i \in \mathcal{W}$ and $\mathbold{v}_j \in \mathcal{V}$, and edge $e \in \mathcal{E}$ is the \emph{double integrator control-constrained minimum-time} motion primitive $\mathbold{r}(t)$ from \cref{eq:double_min_time} that connects neighboring nodes. 
At this stage, collision and state constraints are not enforced to prevent candidate trajectories from being eliminated prematurely.

We recursively compute and store the ranked list of cost-to-go's $V_d:\mathcal{N}\times \mathcal{E} \rightarrow \mathbb{R}_+$ for each node $n \in \mathcal{N}$ and all connecting edges $e \in \mathcal{E}_n$ of $n$ where
\begin{align}
\label{eq:bellman}
    V_d(n,e) = \ell(n,e) + V_d^*\big(\phi(n,e)\big) ~~ \forall  e\in \mathcal{E}_n,
\end{align}
with the optimal cost-to-go $V_d^*(n) = \min_{e\in \mathcal{E}_n} V_d(n,e)$, the cost of taking edge $e$ from node $n$
being $\ell(n,e)$, and the node reached by taking edge $e$ being $\phi(n,e)$. 
The cost of taking an edge is given by $\ell(n,e) = T^*_d(n,e)$, where $T^*_d(n,e)$ is the minimum time of trajectory $\mathbold{r}(t)$ connecting the states of node $n$ to the states of $\phi(n,e)$.
Minimizing \cref{eq:bellman} is the well-known Bellman equation, which is guaranteed to return the optimal cost-to-go.
In \cref{sec:DP} we prove that $V_d^*(n)$ for each node in graph $\mathcal{G}$ is an admissible heuristic for an A* search over a broad class of motion primitives.

\subsection{Stage 3: Forward Motion Primitive Search}
\label{sec:phase_3_motion_search}
The cost-to-go's computed in Stage 2 for the sampled velocities at each waypoint serve as an admissible heuristic (see \cref{def:admissable}) that guides an efficient A* search over motion primitives.
The motion primitives can be generated using any chain of integrators model of order at least two so long as i) the initial and final position and velocities match those used to construct the velocity graph $\mathcal{G}$ and ii) the allowable acceleration is maximally bounded by $u_{max}$ given in \cref{eq:double_min_time}.
The motion primitive search graph is denoted as $\mathcal{G}_{mp} = (\mathcal{N}_{mp},\, \mathcal{E}_{mp})$ where $\mathcal{N}_{mp}$ is the set of nodes, each corresponding to a state vector, and $\mathcal{E}_{mp}$ is the set of edges, each corresponding to a motion primitive that connects neighboring nodes.
A* search is used to meet real-time constraints where the search minimizes the cost $f(n) = g(n) + h(n)$ where $n\in \mathcal{N}_{mp}$ is the current node, $g: \mathcal{N}_{mp} \rightarrow \mathbb{R}_+$ is the cost from the start node $n_s$ to node $n$, and $h: \mathcal{N}_{mp} \rightarrow \mathbb{R}_+$ is the estimated cost from the current node $n$ to the goal node $n_g$.  
In the context of optimal control, $g$ is the cost accrued, i.e., the running cost, for a path from $n_s$ to $n$ whereas $h$ is the estimated cost-to-go, i.e., the estimated value function $V^*$, from $n$ to $n_g$.
In this stage, collision and state constraints are checked for each candidate motion primitive to ensure safety; the methodology for both is discussed in \cref{sec:pruning}.
\begin{figure}[t]
  \centering
  \includegraphics[width=\columnwidth]{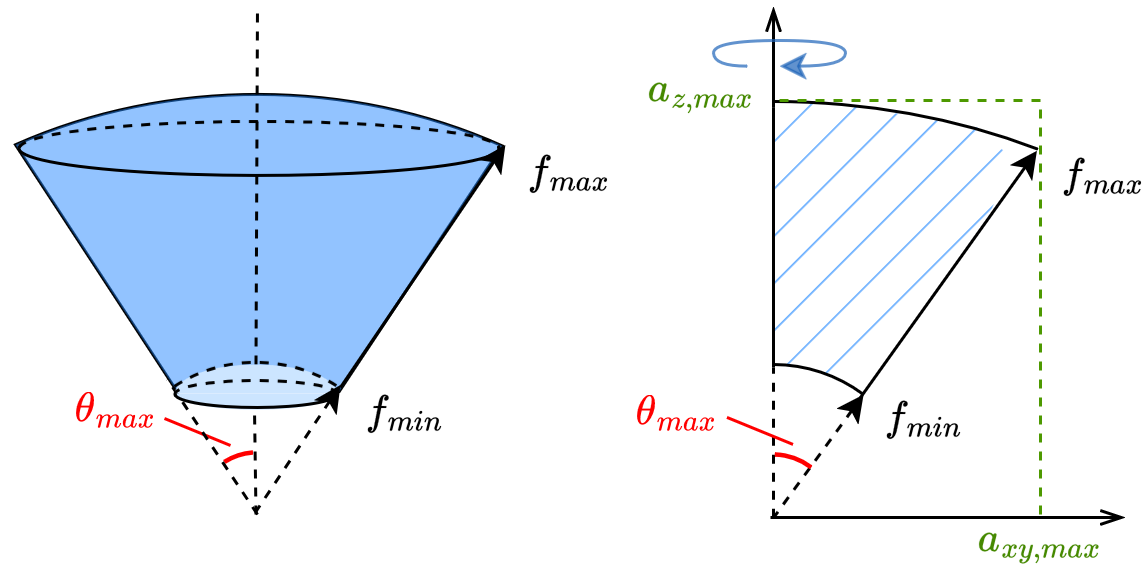}
  \caption{The achievable mass-normalized thrust (nonconvex) of a aerial VTOL vehicle with limits on minimum thrust $f_{min}$, maximum thrust $f_{max}$, and maximum tilt $\theta_{max}$. }
  \label{fig:coupled_constr}
\vskip -0.1in
\end{figure}

\subsection{Pruning Infeasible \& In-Collision Motion Primitives}
\label{sec:pruning}

STITCHER guarantees safety by pruning motion primitives from the final search that violate constraints or are in collision.
For state and actuator constraints, many optimization-based planning approaches approximate the true physical constraints of the system with simple convex constraints, e.g., $\Vert \mathbold{{v}}\Vert_\infty \leq v_{max},~\Vert \mathbold{{a}}\Vert_\infty \leq a_{max},$ etc., to reduce computational complexity.
When polynomials are used to represent the optimal trajectory, imposing a convex hull constraint on the polynomial is one method to enforce such constraints \cite{Zhou19:Robust, Tordesillas22:FASTER}.
However, many of these approximations are made only to simplify the resulting optimization problem and might not accurately reflect the actual physical constraint, which can lead to conservatism.
STITCHER has the freedom to use different methods to enforce state and actuator constraints, but we uniformly sample candidate trajectories in time to check for constraint violations as it was found to be effective and efficient.
Sampling avoids mistakenly eliminating safe trajectories, and the observed computation time was comparable to or better than using convex hulls.
Critically, sampling allows for the inclusion of more complex constraints, such as those that couple multiple axes. 
Examples are 
\begin{equation}
    \label{eq:state_constr}
    \begin{aligned}
        \text{Thrust Magnitude:} & ~~ 
        0 \leq f_{min} \leq \|\boldsymbol{f}\|_2 \leq f_{max} \\
        \text{Thrust Tilt Angle:} & ~~ \|\boldsymbol{f}\|_2 \cos(\theta_{max}) \leq f_z \\
        \text{Linear Velocity:} & ~~ \|\boldsymbol{v}\|_2 \leq v_{max} \\ 
        \text{Angular Velocity:} & ~~ \|\boldsymbol{\omega}\|_2 \leq \omega_{max},
    \end{aligned}
\end{equation}
where we note differential flatness can be leveraged to express the angular velocity constraint in terms of derivatives of position.
\Cref{fig:coupled_constr} depicts the achievable mass-normalized thrust of a VTOL vehicle given thrust and tilt constraints in \eqref{eq:state_constr}. 
The constraints are nonconvex making it difficult to include in real-time optimization-based planners without some form of relaxation, e.g., as in \cite{acikmese2007convex} for a double integrator, which is tight, or a more conservative relaxation. 

\begin{figure}[t]
  \centering
  \includegraphics[width=\columnwidth]{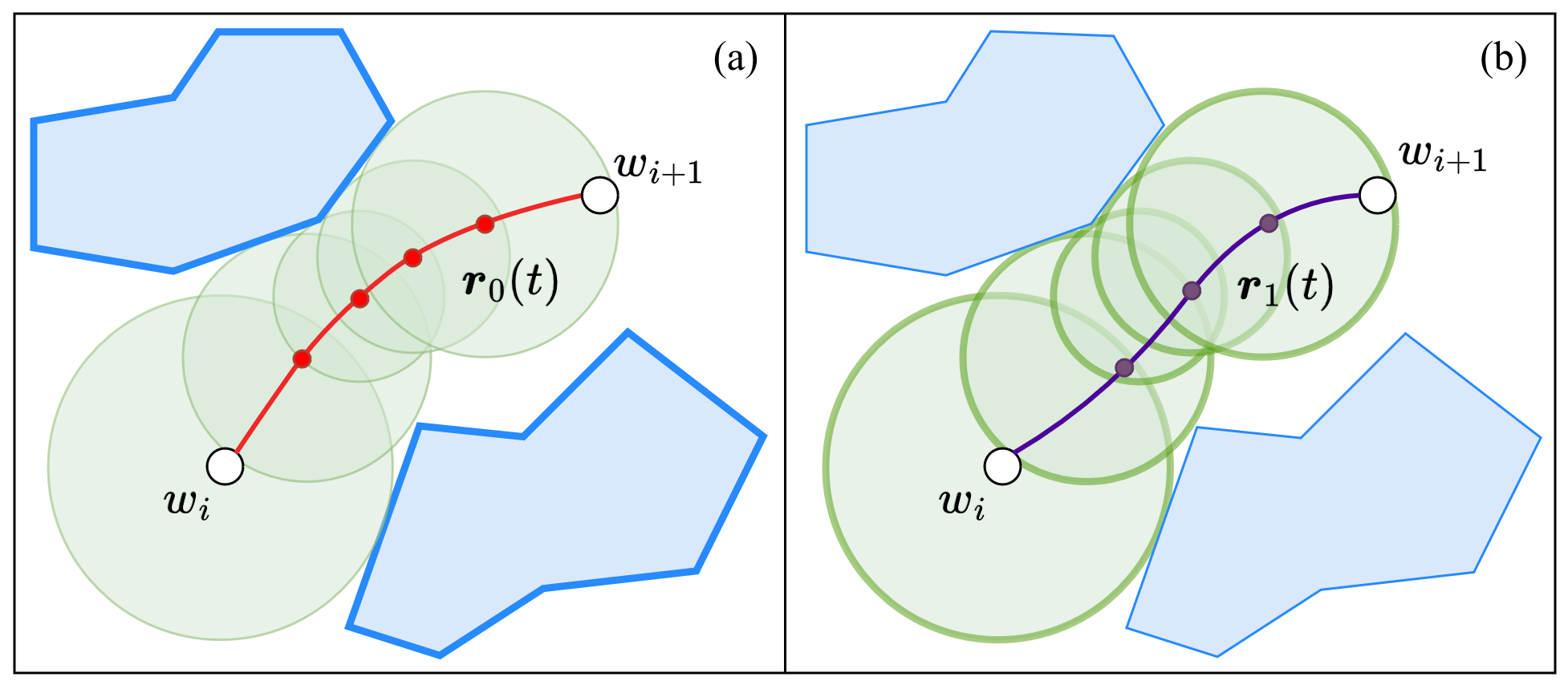}
  \caption{Removing redundant collision checks. (a): Motion primitive $\mathbold{r}_0(t)$ checks for collisions using \cite{Lopez17:Aggressive}. (b): Sampled points of $\mathbold{r}_1(t)$ are checked to lie within obstacle-free regions derived from $\mathbold{r}_0(t)$ calculations.}
  \label{fig:smart_collision}
\vskip -0.1in   
\end{figure}

An efficient collision checking strategy was devised by constructing a safe set of spheres resulting from a sampling-based collision checking approach proposed in \cite{Lopez17:Aggressive}.
The core idea from \cite{Lopez17:Aggressive} is that a trajectory can be intelligently sampled for collisions by estimating the next possible ``time-of-collision" along the trajectory by combining obstacle proximity and the vehicle's maximum speed.
Leveraging this idea, further computation time savings can be achieved by storing and reusing nearest neighbor queries.
\Cref{fig:smart_collision}a depicts that for the first candidate motion primitive connecting two successive waypoints, we use the strategy from \cite{Lopez17:Aggressive} while also storing the resulting set of safe, obstacle-free spheres $\mathcal{S}$.
For subsequent motion primitives between the same waypoint pair (see \cref{fig:smart_collision}b), a nearest neighbor query is only done if the primitive is expected to leave the set $\mathcal{S}$.
For a point found to be within a certain sphere, the next possible ``time-of-collision" is when the trajectory intersects the edge of the sphere, which can be estimated by assuming the trajectory emanates radially from the center of the sphere at maximum velocity.
The process is repeated until the final time horizon $T$ is reached.
Unlike spherical safety corridors, our safe set is only used to avoid repeated calculation, and allows for on-the-fly addition of collision-free spheres.
STITCHER thus has the flexibility to create and check candidate trajectories without being restricted to pre-defined safety spheres. 

\begin{figure}[t]
 \centering
 \includegraphics[width=\columnwidth]{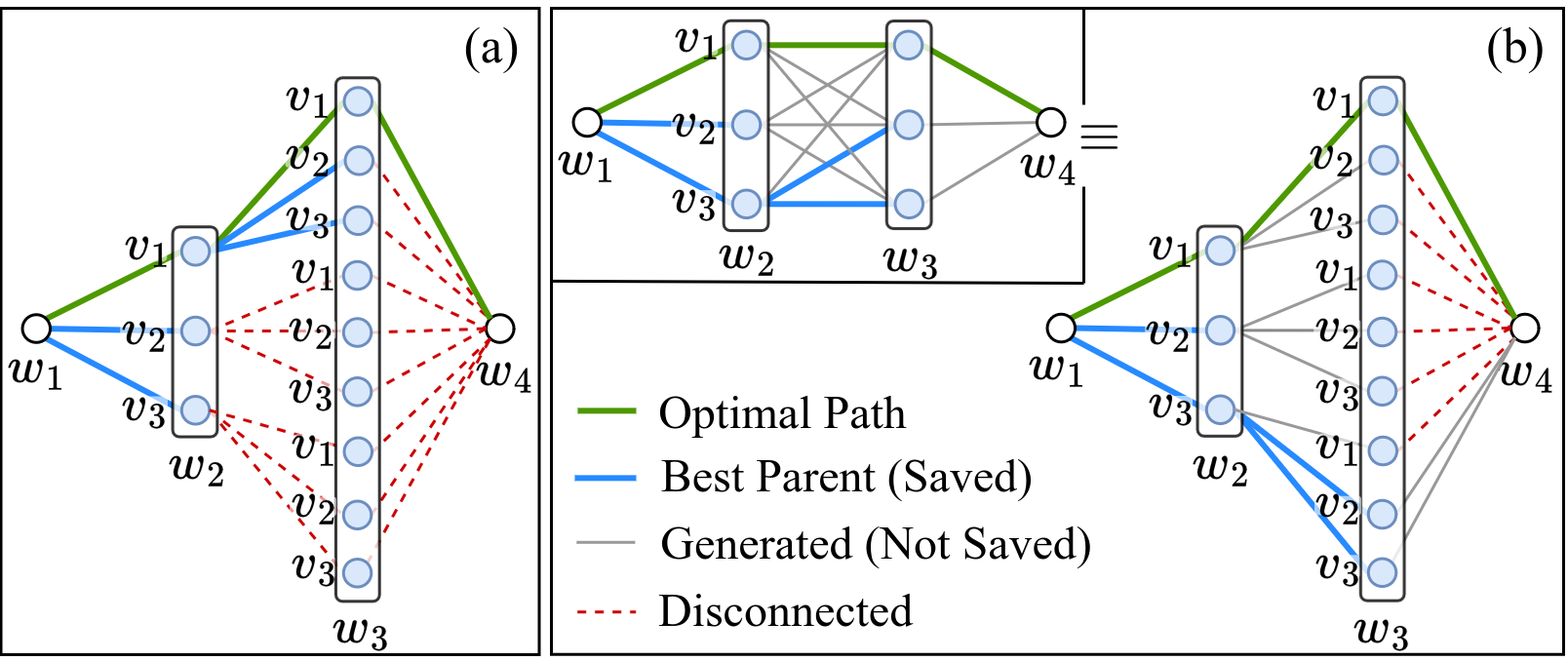}
\caption{Comparison between the graph of (a) a traditional greedy algorithm and (b) the greedy pre-processing step used by STITCHER.}
\label{fig:greedy_graph} 
\vskip -0.1in
\end{figure}

\subsection{Motion Primitive Search Graph with Triple Integrator}
In many applications, a triple integrator model for generating motion primitives is sufficient because 
discontinuities in jerk typically do not severely degrade tracking for most aerial vehicles. 
Motion primitives in our formulation are derived imposing a free terminal acceleration in \cref{eq:triple_min_effort}.
If the acceleration at each node, i.e., the final acceleration, ${\mathbold{a}}_f$, is free and graph nodes are represented by a waypoint-velocity-acceleration tuple, the number of edges grows exponentially with respect to the number of waypoints.
Our formulation employs a greedy pre-processing step in which the motion primitive search graph $\mathcal{G}_{mp}$ is identical in size to the velocity graph $\mathcal{G}$ (graph size detailed in Section \ref{sec:vel_graph_size}). 
This formulation offers an advantage in terms of computational efficiency, as a full-state trajectory is generated while the graph size is restricted by only the number of sampled velocities. 
Excluding acceleration information when creating the graph assumes that the optimal stitched trajectory is only weakly dependent on acceleration at each waypoint.
\Cref{fig:greedy_graph} shows the greedy graph pre-processing step (right) maintains more edges than a traditional greedy algorithm (left). 

\section{THEORETICAL ANALYSIS}
In this section we 
prove STITCHER has bounded time and memory complexity by showing the velocity and motion primitive graphs are finite. 
We also 
show STITCHER is complete and optimal by proving the heuristic used in the motion primitive search is admissible.

\subsection{Velocity Graph Complexity} \label{sec:vel_graph_size}
The following proposition proves the size of the velocity graph $\mathcal{G}$ is finite and solely depends on the number of waypoints and sampled velocities; a property that also holds for the motion primitive graph $\mathcal{G}_{mp}$ by extension.
This result is critical as a finite graph yields \emph{known time complexity} for the motion primitive search.
In other words, an upper bound can be placed on the computation time of the planner given known quantities. 
This is in contrast to optimization-based methods where the time complexity depends on the number of iterations required to converge---which cannot be known \textit{a priori}---so the time to compute a trajectory via optimization does not have an \textit{a priori} bound.

\begin{proposition} \label{prop_bwd_size}
For $N$ waypoints and $M$ sampled velocities, the number of nodes $|\mathcal{N}|$ and edges $|\mathcal{E}|$ in graph $\mathcal{G}$ is
\begin{align}
|\mathcal{N}| &= (N-2)M + 2, \label{eq:bwd_total_nodes}\\
|\mathcal{E}| &= (N-3)M^2 + 2M ~~ \text{for}~N > 2.
\label{eq:bwd_total_edges}
\end{align}
\end{proposition}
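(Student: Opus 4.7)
The plan is to prove both equalities by direct combinatorial counting, splitting the waypoints into two categories: boundary waypoints (whose velocity is pinned by the problem's boundary conditions $\mathbf{x}_0$ and $\mathbf{x}_f$) and interior waypoints (whose velocity is freely sampled). I would then tally edges layer by layer, since the construction in \cref{sec:phase_2_vel_graph} only places edges between nodes at consecutive waypoints.

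For the node count, I would first observe that the initial waypoint $\mathbold{w}_1$ and terminal waypoint $\mathbold{w}_N$ each admit only one velocity value, namely the one prescribed by the boundary conditions of \cref{eq:tpp}; hence each contributes a single node to $\mathcal{N}$. Each of the remaining $N-2$ interior waypoints contributes exactly $M$ nodes, one per sampled velocity in $\mathcal{V}$. Summing gives $|\mathcal{N}| = 1 + (N-2)M + 1$, which rearranges to \cref{eq:bwd_total_nodes}.

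For the edge count, I would partition $\mathcal{E}$ into $N-1$ disjoint layers $\mathcal{E}_i$, where $\mathcal{E}_i$ consists of all edges whose endpoints lie at waypoints $\mathbold{w}_i$ and $\mathbold{w}_{i+1}$. Since every pair of nodes in consecutive layers is connected by a minimum-time double integrator primitive, $|\mathcal{E}_i|$ equals the product of the number of nodes at $\mathbold{w}_i$ and at $\mathbold{w}_{i+1}$. Using the per-waypoint node counts from the previous step yields $|\mathcal{E}_1| = 1 \cdot M = M$, $|\mathcal{E}_{N-1}| = M \cdot 1 = M$, and $|\mathcal{E}_i| = M \cdot M = M^2$ for each of the $N-3$ interior layers $i = 2, \dots, N-2$. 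Summing gives $|\mathcal{E}| = 2M + (N-3)M^2$, matching \cref{eq:bwd_total_edges}. The restriction $N > 2$ is what ensures at least one interior layer exists so that the $(N-3)M^2$ term is well-posed (nonnegative).

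The proof is essentially a bookkeeping exercise, so the main obstacle is more expository than mathematical: making explicit that the endpoint waypoints inherit prescribed velocities (and therefore contribute one node apiece rather than $M$), and that the graph is \emph{layered} in the sense that edges only exist between consecutive waypoints. Once those two structural observations are stated, both formulas follow by straightforward addition.
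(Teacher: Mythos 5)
Your proof is correct and follows essentially the same route as the paper's: both count one node each at the start and goal (fixed boundary velocities), $M$ nodes per interior waypoint, and then tally edges layer-by-layer between consecutive waypoints as $M + (N-3)M^2 + M$. Your write-up is slightly more explicit about the layered structure and the $N>2$ caveat, but the decomposition and arithmetic are identical.
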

\begin{proof}
Using \cref{fig:full_sys_arch} (middle), the start and goal nodes contribute 2 nodes to the graph $\mathcal{G}$. 
For intermediate waypoints, given $M$ sampled velocities, there are $M$ nodes per waypoint.
As a result, $|\mathcal{N}| = (N-2)M + 2$ which is \cref{eq:bwd_total_nodes}.
For each edge, we consider the transition to successive waypoints. 
Ignoring the trivial $N = 2$ case where $|\mathcal{E}| = 1$, there are $M$ connections between the start node and next waypoint, which also has $M$ nodes. 
The same applies for connecting waypoint $w_{N-1}$ to the goal node, resulting in a total of $2M$ edges. 
For all other intermediate waypoint pairs, $M$ nodes connect to $M$ nodes at the next waypoint so there are $M^2$ edges. 
The total number of edges is then \cref{eq:bwd_total_edges}.
\end{proof}

\begin{corollary}
\label{cor:lqmt}
    The size of the motion primitive graph $\mathcal{G}_{mp}$ using Linear Quadratic Minimum Time (LQMT) motion primitives with free terminal acceleration for a triple integrator is identical to the velocity graph $\mathcal{G}$.
\end{corollary}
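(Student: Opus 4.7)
The plan is to establish a direct bijection between $\mathcal{G}_{mp}$ and $\mathcal{G}$ and then invoke \cref{prop_bwd_size} to transfer the count. The structural fact that makes this possible is that the LQMT problem in \cref{eq:triple_min_effort} fixes only position and velocity at the terminal time and leaves the terminal acceleration (and any higher-order state) free. Consequently, once the initial full state of an edge is fixed, specifying a target (position, velocity) pair at the next waypoint uniquely determines a minimum-cost LQMT primitive and hence a unique terminal acceleration.

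First I would recall that nodes in $\mathcal{G}$ are indexed by (waypoint, velocity) pairs. A naive triple-integrator search graph would in principle index nodes by (waypoint, velocity, acceleration) tuples, requiring the acceleration to be sampled at each intermediate waypoint and producing a node/edge count that grows with that sampling resolution (and, for edges, exponentially across successive waypoints). The greedy pre-processing step described in the previous subsection removes this blow-up: for each (waypoint, velocity) pair it commits to a single acceleration value chosen by the greedy rule associated with the incoming LQMT primitive. This yields a well-defined map from (waypoint, velocity) pairs in $\mathcal{G}$ to nodes in $\mathcal{G}_{mp}$.

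Next I would verify this map is a bijection on both nodes and edges. Every (waypoint, velocity) pair is represented in $\mathcal{G}_{mp}$ exactly once because the greedy assignment chooses one acceleration per pair; conversely every node in $\mathcal{G}_{mp}$ sits at some waypoint with some velocity. Therefore $|\mathcal{N}_{mp}| = |\mathcal{N}| = (N-2)M + 2$. For edges, once the starting full state of a node is fixed by the greedy rule, the LQMT edge to any neighboring (waypoint, velocity) node is unique (because terminal acceleration is free, so no additional branching occurs across terminal-acceleration choices). The pairing of successive waypoints therefore contributes exactly the same $2M$ boundary edges and $M^2$ interior edges per waypoint transition as in \cref{prop_bwd_size}, giving $|\mathcal{E}_{mp}| = |\mathcal{E}| = (N-3)M^2 + 2M$.

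The main obstacle will be stating precisely what the greedy pre-processing step guarantees in graph-theoretic terms, namely that it produces exactly one (not zero, not many) acceleration value per (waypoint, velocity) node and that this commitment is consistent when the node is used as the source of an outgoing edge. Once that single-valuedness is pinned down, the remainder is a routine application of \cref{prop_bwd_size}, and the corollary follows immediately. The start and goal nodes carry prescribed boundary values and simply contribute the same two endpoint nodes as in $\mathcal{G}$, so they do not affect the counting.
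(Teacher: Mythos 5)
Your proposal is correct and takes essentially the same route as the paper, whose proof is a one-liner: since the terminal acceleration is free, nodes remain indexed by (waypoint, velocity) pairs, so $N$ and $M$ are identical for both graphs and \cref{prop_bwd_size} gives the counts. Your version simply makes explicit the bijection and the role of the greedy pre-processing step in committing one acceleration per node, which the paper leaves implicit.
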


\begin{proof}
    The proof is immediate since the terminal acceleration is free so $N$ and $M$ are identical for both graphs.
\end{proof}

\subsection{Admissible Heuristic for Motion Primitive Search}
\label{sec:DP}
Heuristics are critical for speeding up graph search by incentivizing the search to prioritize exploring promising nodes.
For example, in A* search, the next node explored is selected based on minimizing the cost $f(n) = g(n) + h(n)$, where $g$ is the stage cost to get from the start node $n_s$ to node $n$, and $h$ is a heuristic estimate of the remaining cost to reach the goal node $n_g$.
A* search is guaranteed to find an optimal solution so long as the heuristic function $h$ is admissible (see \cref{def:admissable}) \cite{Russell16:Artificial}.
Below, we prove the cost-to-go $V^*$ for each node in the velocity graph $\mathcal{G}$ calculated in Stage 2
is an admissible heuristic for an A* search over motion primitives of any higher-order chain of integrators.
 
\begin{definition}[{\cite{Russell16:Artificial}}] 
\label{def:admissable}
A function $h : \mathcal{N} \rightarrow \mathbb{R}$ is an admissible heuristic if for all $n \in \mathcal{N}$ then $h(n) \leq h^*(n)$, 
where $h^*$ is the optimal cost from $n$ to the goal node $n_g$.
\end{definition}

\begin{proposition}
\label{prop:c2g_opt}
    Consider the optimal control problem
    \begin{align}
    \label{eq:gen_opt}
            \min_{T,\,\mathbold{u}} \quad & J =  \rho \, T + \int^T_0 q(\mathbold{r},\mathbold{{v}},\dots,\mathbold{u}) \, dt  \\ 
            \mathrm{s.t.} \quad &  \mathbold{r}^{(p)} = \mathbold{u}, \ c(\mathbold{a}) \leq 0 \notag \\
            & \mathbold{r}(0) = \mathbold{r}_0, \, \mathbold{v}(0) = \mathbold{v}_0, \, \dots , \mathbold{r}^{(p\!-\!1)}(0) = \mathbold{r}_0^{(p\!-\!1)} \notag\\
            & \mathbold{r}(T) = \mathbold{r}_f, \, \mathbold{v}(T) = \mathbold{v}_f, \,  \mathbold{r}^{(k\!-\!1)}(T) \ \mathrm{free} \ \mathrm{for} \ 3 \leq k \leq p \notag
    \end{align}
    where $q$ is a positive definite function, the system is at least second order ($p\geq 2$), and the position and velocity boundary conditions are identical to those of \cref{eq:double_min_time}, with all other boundary constraints free to specify. 
    If $u_{max}$ in \cref{eq:double_min_time} is the maximum possible acceleration achievable in a given axis imposed by $c(\mathbold{a}) \leq 0$, then the optimal cost-to-go $V^*$ from the initial conditions for \cref{eq:gen_opt} satisfies $V^*  \geq  \rho \, T_d^*$ where $T_d^*$ is the optimal final time for \cref{eq:double_min_time}.
\end{proposition}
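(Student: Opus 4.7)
The plan is to exhibit a simple comparison argument: every trajectory feasible for the higher-order problem \eqref{eq:gen_opt} induces a trajectory that is also feasible for the double-integrator minimum-time problem \eqref{eq:double_min_time}, so its duration can be no smaller than $T_d^*$, and a positive stage cost only makes things worse. The heart of the argument is that the definition of $u_{max}$ as the maximum single-axis acceleration permitted by $c(\mathbold{a})\leq 0$ is exactly what lets us transfer feasibility from the richer problem back to the double integrator.

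First, I would fix an arbitrary admissible control $\mathbold{u}(\cdot)$ for \eqref{eq:gen_opt} with final time $T$ and corresponding trajectory $\mathbold{r}(\cdot)$. By hypothesis, $\mathbold{r}$ satisfies $\mathbold{r}(0)=\mathbold{r}_0$, $\mathbold{v}(0)=\mathbold{v}_0$, $\mathbold{r}(T)=\mathbold{r}_f$, $\mathbold{v}(T)=\mathbold{v}_f$, i.e.\ exactly the boundary conditions appearing in \eqref{eq:double_min_time}. Set $\tilde u(t):=\ddot{s}(t)$, the acceleration along the fixed axis. Because $c(\mathbold{a}(t))\leq 0$ for all $t\in[0,T]$ and $u_{max}$ is, by assumption, the largest single-axis magnitude compatible with that constraint, we get $|\tilde u(t)|\leq u_{max}$ pointwise. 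Thus $\tilde u$ is an admissible control for the scalar problem \eqref{eq:double_min_time} on $[0,T]$, driving the scalar state from $(s_0,v_0)$ to $(s_f,v_f)$.

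Next, the optimality of $T_d^*$ for \eqref{eq:double_min_time} immediately yields $T\geq T_d^*$. Since $q$ is positive definite (so nonnegative), the cost of the original trajectory satisfies
\begin{equation*}
J \;=\; \rho\,T + \int_0^T q(\mathbold{r},\mathbold{v},\dots,\mathbold{u})\,dt \;\geq\; \rho\,T \;\geq\; \rho\,T_d^*.
\end{equation*}
Taking the infimum over admissible controls gives $V^*\geq \rho\,T_d^*$, which is the claim. If desired, the per-axis bound can then be lifted to all axes by taking the max, but the proposition as stated is purely per-axis and needs nothing further.

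The only step that is substantive rather than routine is the feasibility transfer, and specifically the clean identification of $\ddot s$ as a control for \eqref{eq:double_min_time}: one must argue that whatever mechanism the higher-order model uses to produce $\mathbold{a}$ (via $\mathbold{r}^{(p)}=\mathbold{u}$), the induced acceleration signal is an \emph{admissible} scalar control for the double integrator, with the correct boundary data on position and velocity and no constraint violations. Once that observation is made, the rest is just monotonicity of the cost in $T$ together with nonnegativity of $q$. I expect the main subtlety (beyond bookkeeping) to be making precise that $u_{max}$, as defined via $c(\mathbold{a})\leq 0$, really is an upper bound on each axis individually; this should be stated as part of the hypotheses so that the projection argument goes through cleanly.
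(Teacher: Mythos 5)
Your proof is correct, and it takes a genuinely different---and arguably tighter---route than the paper's. The paper reasons about properties of the \emph{optimal solutions}: for $p=2$ it invokes monotonicity of the double-integrator minimum time in $u_{max}$ together with existence/uniqueness of $T_d^*$, and for $p>2$ it argues by contradiction that matching $T_d^*$ would force a discontinuous acceleration profile, which a continuous-time chain of integrators cannot produce. You instead use a feasibility-transfer (relaxation) argument: any admissible trajectory of \cref{eq:gen_opt} has matching position and velocity boundary data, and its per-axis acceleration is bounded by $u_{max}$ precisely because $u_{max}$ is defined as the largest single-axis acceleration compatible with $c(\mathbold{a})\leq 0$; hence $\ddot{s}$ is an admissible control for \cref{eq:double_min_time} achieving time $T$, so $T\geq T_d^*$, and nonnegativity of $q$ gives $J\geq\rho T\geq\rho T_d^*$ before taking the infimum. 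What your approach buys: it needs no case split on $p$, no appeal to the bang-bang structure or to uniqueness of $T_d^*$, and it cleanly handles the possibility that the infimum is not attained, whereas the paper's contradiction step for $p>2$ implicitly assumes an optimal acceleration profile exists and would have to coincide pointwise with the bang-bang one. What the paper's argument buys is the sharper qualitative statement that the inequality is \emph{strict} for $p>2$ (and for $q\neq 0$), which your relaxation argument does not deliver but which is also not needed for admissibility of the heuristic. Your closing caveat is the right one to flag: the only load-bearing hypothesis is that $u_{max}$ really is a per-axis upper bound induced by $c(\mathbold{a})\leq 0$, which the proposition's wording supplies.
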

\begin{proof}
    First, consider the case when $p=2$. 
    For a given axis, if $u_{max}$ is chosen so that it exceeds the allowable acceleration imposed by $c(\boldsymbol{a}) \leq 0$, e.g., $u_{x,max} \geq \max_{a_x}c({\boldsymbol{a}})$ (see \cref{fig:coupled_constr}), then the optimal final time $T^*$ for $\cref{eq:gen_opt}$ will always be greater than that of \cref{eq:double_min_time} even when $q = 0$.
    Specifically, when $q = 0$, one can show the optimal final time for \cref{eq:double_min_time} increases as $u_{max}$ decreases. 
    Moreover, $T^*_d$ for \cref{eq:double_min_time} is guaranteed to exist and be unique \cite{Kirk04:Optimal}.
    Hence, by appropriate selection of $u_{max}$, we can ensure $T^* \geq T^*_d$ always, where equality holds when $q = 0$ and $c(\boldsymbol{a})$ is a box constraint. 
    If $q \neq 0$, then it immediately follows that $T^* > T^*_d$ because $q$ is positive definite by construction.
    Now consider the case when $p > 2$.
    We can deduce $V^*  >  \rho \, T^*_d$ by contradiction.
    Specifically, assume $T^* = T_d^*$ for $p > 2$.
    This would require $\boldsymbol{a}$ to be discontinuous in order to match the bang-bang acceleration profile of \cref{eq:double_min_time}.
    However, \cref{eq:gen_opt} is a continuous-time linear system that will not exhibit discrete behaviors, e.g., jumps, so it is mathematically impossible to generate an optimal control sequence where the acceleration profiles for \cref{eq:gen_opt,eq:double_min_time} will be identical. 
    It can then be concluded $V^* > \rho \, T^*_d$ for $p > 2$.
    Therefore, $V^* \geq \rho \, T^*_d$ for $p \geq 2$, as desired.
\end{proof}

\begin{remark}
    \cref{prop:c2g_opt} also holds when inequality state or actuator constraints in \cref{eq:gen_opt} are present, and when the terminal desired states are specified rather than free.
\end{remark}

The main result of this section can now be stated. 

\begin{theorem}
\label{theorem:admissible}
    The optimal cost-to-go for the minimum-time input-constrained double integrator optimal control problem \cref{eq:double_min_time} is an admissible heuristic for motion primitive search where the primitives are solutions to the optimal control problem of the form \cref{eq:gen_opt}.
\end{theorem}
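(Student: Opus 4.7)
The plan is to prove admissibility by chaining two inequalities: a per-segment lower bound coming from Proposition~\ref{prop:c2g_opt}, and a global Bellman-optimality bound on $V_d^*$ inside the velocity graph $\mathcal{G}$. Concretely, I want to show that for any node $n$ in $\mathcal{G}_{mp}$, the true optimal cost-to-go $h^*(n)$ of the Stage 3 problem dominates $\rho\,V_d^*(n)$, so $V_d^*$ (up to the positive constant $\rho$, which can be absorbed into the definition of the heuristic or into the edge weights of $\mathcal{G}$) satisfies \cref{def:admissable}.

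The first step is to pick an arbitrary $n \in \mathcal{N}_{mp}$ and let $\pi^* = (n_0 = n, n_1, \dots, n_K = n_g)$ be a path that realizes $h^*(n)$ in the motion primitive search graph. Because the cost functional of \cref{eq:gen_opt} is additive in time, $h^*(n) = \sum_{i=0}^{K-1} J^*(n_i, n_{i+1})$, where $J^*(n_i, n_{i+1})$ is the optimal cost of the LQMT motion primitive connecting $n_i$ to $n_{i+1}$. By Corollary~\ref{cor:lqmt}, the nodes of $\mathcal{G}_{mp}$ coincide with those of $\mathcal{G}$, so each $n_i$ is a (waypoint, sampled-velocity) pair, and the position/velocity boundary conditions required by Proposition~\ref{prop:c2g_opt} hold segment-by-segment. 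Provided $u_{max}$ in \cref{eq:double_min_time} is chosen so that it dominates the axis-wise acceleration permitted by $c(\mathbold{a})\leq 0$ (which is precisely how $\mathcal{G}$ is constructed), Proposition~\ref{prop:c2g_opt} yields $J^*(n_i, n_{i+1}) \geq \rho\, T_d^*(n_i, n_{i+1})$ on every edge.

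The second step is to invoke Bellman optimality in $\mathcal{G}$. Since $\pi^*$ projects to a valid sequence of nodes in $\mathcal{G}$, the sum $\sum_{i=0}^{K-1} T_d^*(n_i, n_{i+1})$ is the total cost of a particular feasible path from $n$ to $n_g$ in $\mathcal{G}$. The cost-to-go computed via \cref{eq:bellman} is the minimum over all such paths, so $V_d^*(n) \leq \sum_{i=0}^{K-1} T_d^*(n_i, n_{i+1})$. Concatenating the two inequalities gives
\begin{equation*}
\rho\, V_d^*(n) \;\leq\; \rho \sum_{i=0}^{K-1} T_d^*(n_i, n_{i+1}) \;\leq\; \sum_{i=0}^{K-1} J^*(n_i, n_{i+1}) \;=\; h^*(n),
\end{equation*}
which is exactly admissibility (after scaling by $\rho$). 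Since $n$ was arbitrary, this holds for all nodes in $\mathcal{G}_{mp}$.

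The main obstacle I expect is justifying that the projection of the Stage 3 optimal path onto $\mathcal{G}$ is actually feasible. A priori an optimal trajectory in a continuous problem need not pass through waypoints with velocities drawn from the discrete set $\mathcal{V}$; what rescues us is that Stage 3 search is \emph{defined} on $\mathcal{G}_{mp}$, whose node set is already the discretized (waypoint, sampled-velocity) product, so every candidate path—including the optimal one—projects onto a legal path in $\mathcal{G}$. A secondary subtlety is the bookkeeping around Proposition~\ref{prop:c2g_opt} when the terminal acceleration at intermediate nodes is \emph{not} free (as it will be in the chained setting): here the remark immediately after Proposition~\ref{prop:c2g_opt} is exactly what is needed, since constraining additional terminal states can only increase $J^*$, preserving the bound $J^* \geq \rho T_d^*$ segment-wise.
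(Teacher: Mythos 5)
Your proof is correct and follows essentially the same route as the paper: apply \cref{prop:c2g_opt} edge-by-edge along the optimal Stage~3 path and combine with Bellman optimality of $V_d^*$ in $\mathcal{G}$ (the paper compresses your explicit path decomposition into the phrase ``by induction''). Your version additionally spells out two points the paper glosses over---the legality of projecting the Stage~3 optimal path onto $\mathcal{G}$, and the appeal to the remark following \cref{prop:c2g_opt} when intermediate terminal states are constrained---and the factor of $\rho$ you carry is harmless, since $\rho>1$ gives $V_d^*\leq\rho\,V_d^*\leq h^*$ either way.
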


\begin{proof}
    Let $\mathcal{G} = (\mathcal{N},\mathcal{E})$ be a graph with nodes being sampled velocities at waypoints and edges being the time-optimal trajectories using an input-constrained double integrator.
    Further, let $\mathcal{G}_{mp} = (\mathcal{N}_{mp},\mathcal{E}_{mp})$ be a graph with nodes being sampled velocities, accelerations, etc. at waypoints and edges being trajectories that are solutions to \cref{eq:gen_opt}. 
    Using the Bellman equation, the optimal cost-to-go $V_{mp}^*(n)$ for any $n \in \mathcal{N}_{mp}$ can be computed recursively.
    Using \cref{prop:c2g_opt}, $V_{mp}^*(n) \geq V_d^*(n^\prime)$ by induction where $V_d^*$ is the optimal cost-to-go for the minimum-time input-constrained double integrator with $n^\prime \in \mathcal{N}$. 
    Recognizing $\mathcal{N} \subseteq \mathcal{N}_{mp}$, $V_d^*(n^\prime)$ can be rewritten as $V_d^*(n)$.    
    Setting $h^*(n) = V^*_{mp}(n)$ and $h(n) = V_d^*(n)$, it can be concluded $h(n) \leq h^*(n)$.
    Therefore, by \cref{def:admissable}, the optimal cost-to-go computed for $\mathcal{G}$ is an admissible heuristic for the motion primitive search over $\mathcal{G}_{mp}$.
\end{proof}

The importance of \cref{theorem:admissible} follows from the fact that searching a graph with an admissible heuristic is \emph{guaranteed} to return the optimal path through the graph \cite{Russell16:Artificial}, and can significantly improve search efficiency. 
The effectiveness of the proposed heuristic is analyzed in \cref{sec:results}.

\begin{figure}[t]
 \centering
 \includegraphics[width=\columnwidth]{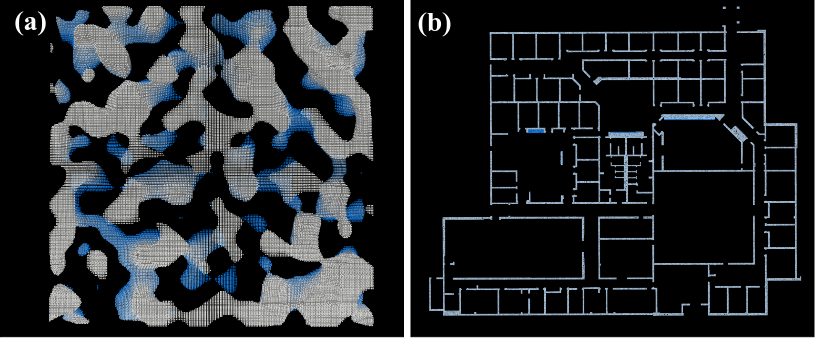}
\caption{Simulation test environments. (a): Perlin Noise environment (b): Willow Garage environment.}
\label{fig:env_type} 
\vskip -0.1in
\end{figure}

\section{SIMULATION RESULTS}
\label{sec:results}

Simulation experiments were completed in a Perlin Noise and the Willow Garage environment, both with a volume of approximately $50\times50\times5$ m (see \cref{fig:env_type}).
Geometric paths with $N = 4,\,6,\,8$ waypoints were found for different start and end locations in each environment.
For all experiments, we imposed $f_{min} = 0.85 \text{ m/s}^2$, $f_{max} = 18.75 \text{ m/s}^2$, $\theta_{max} = 60^\circ$, $\omega_{max} = 6 \text{ rad/s}$, $v_{max} = 10 \text{ m/s}$, and a time penalty $\rho = 1000$.
STITCHER requires a discrete velocity set $\mathcal{V}$ which is composed of a set of magnitudes and directions.
At each waypoint $\mathbold{w}_{i}$, the velocity direction set $\mathcal{V}_d$ is defined by the center and boundaries of a $20^\circ$ cone. We define the center of the cone as the vector that points from the previous waypoint $\mathbold{w}_{i-1}$ to the next waypoint $\mathbold{w}_{i+1}$.
For magnitudes, we use the set $\mathcal{V}_m = \left\{0, ~0.25\,v_{max}, ~0.5\,v_{max}, ~0.75\,v_{max}, ~ v_{max}\right\}$ for our analysis unless otherwise indicated.
All reported times are from tests run on an 11$^\mathrm{th}$ generation Intel i7 laptop.

\begin{table}[t]
\caption{Generated Edges for Heuristic Evaluation}
\vskip -0.1in  
\label{tab:heuristic_edge_explored_time}
\begin{center}
\begin{tabular}{c|c|c|c|c|c}
\hline
\multirow{2}{*}{Map} & \multirow{2}{*}{$N$} & \multirow{2}{*}{Total Edges} & \multicolumn{2}{|c|}{Edges Generated} & \multirow{2}{*}{\% Red.} \\
& & & Dijkstra & STITCHER & \\
\hline
\multirow{3}{*}{\parbox{0.9cm}{\centering Perlin Noise}} & 4 & 1023 & 957 &  {742} & 22 \\
& 6 & 2945 & 2720 & {2139} & 21\\
& 8 & 4867 & 4804 & {4082} & 15 \\
\hline
\multirow{3}{*}{\parbox{0.9cm}{\centering Willow Garage}} & 4 & 1023 & 896 & {893} & 0.3 \\
& 6 & 2945 & 2108 & {1843}  & 13\\
& 8 & 4867 & 3955 & {2946}  & 26\\
\hline
\end{tabular}
\end{center}
\vskip -0.1in  
\end{table}

\begin{figure}[t!]
 \centering
 \includegraphics[width=\columnwidth]{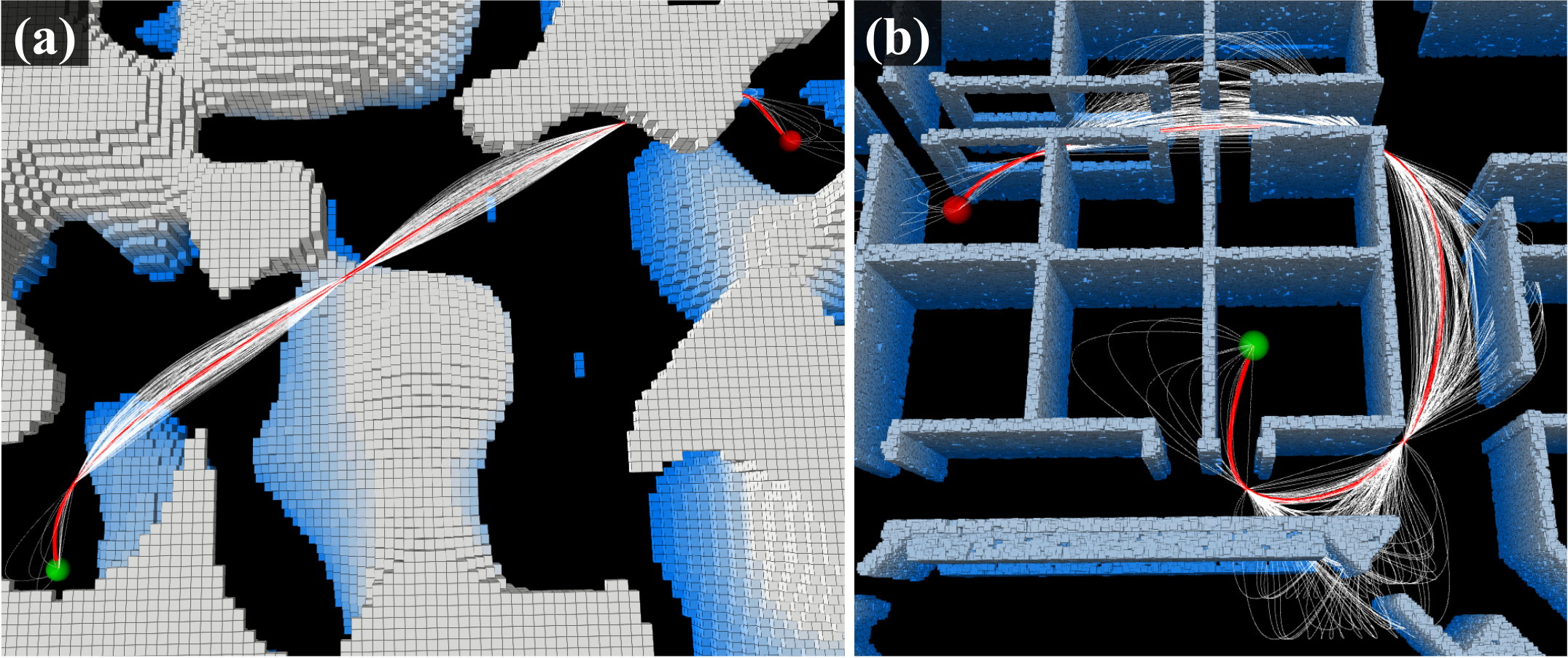}
 \caption{Final trajectories (red) through (a): Perlin Noise and (b) Willow Garage environment. Trajectory options (white) which inform the heuristic are more likely to be in collision in the Willow Garage due to tight corridors.}
\label{fig:environment_comparison}
\vskip -0.1in    
\end{figure}

\subsection{Heuristic Benchmarking}
\label{sec:heuristic_benchmark}
The quality of the heuristic used to guide STITCHER can be quantified by comparing the number of edges (number of motion primitives), generated by STITCHER to an offline version of STITCHER that runs Dijkstra's algorithm rather than A*.
The number of edges created is a better evaluation metric than nodes explored because motion primitive generation and evaluation is the main source of computation time.
\Cref{tab:heuristic_edge_explored_time} shows the number of edges created for STITCHER and Dijkstra's algorithm using execution time as the edge cost. The velocity magnitude and direction sets were kept constant across both planners with $|\mathcal{V}_m|=11$ containing speeds in the interval $[0,\,v_{max}]$ and $|\mathcal{V}_d|= 3$. STITCHER generates an average of 20\% fewer edges in the Perlin Noise environment and 13\% fewer in the Willow Garage environment.
The reduced effectiveness of the heuristic in the latter test case is attributed to narrower corridors, resulting in a greater number of motion primitives being in collision (see \cref{fig:environment_comparison}). 
The reduction in explored nodes shows that the heuristic is effective, but its performance depends on the environment. 
Note that Dijkstra's algorithm does not generate the maximum possible number of edges because nodes become disconnected if motion primitives are found to be in collision or exceed state constraints.

\begin{figure}[t]
 \centering
 \includegraphics[width=\columnwidth,trim={0 0.25cm 0 0},clip]{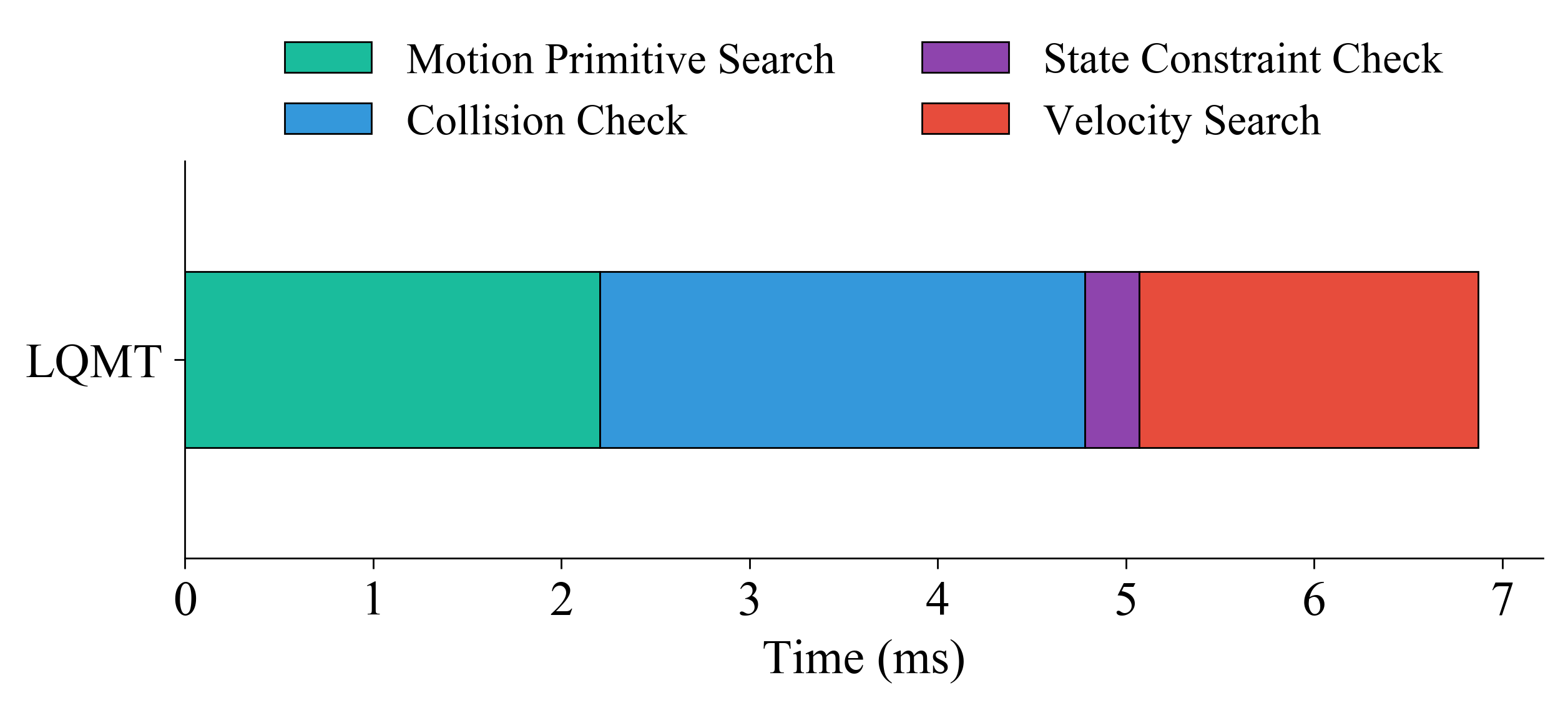}
 \caption{The average contribution of different path planning components.}
\label{fig:planning_time}
\vskip -0.1in    
\end{figure}

\subsection{Timing Analysis}
\Cref{fig:planning_time} shows the average computation time of the different processes of STITCHER across the six tested trials.
The average time to perform the velocity graph search is 1.8 ms, compared to 2.2 ms for the motion primitive search.
Although both searches have the same graph size, it is important to note that the motion primitives from \eqref{eq:triple_min_effort} are more time-consuming to generate than the minimum time trajectories used in the velocity graph. 
Hence, the similar computation times is from the search heuristic reducing the number of edges generated.
The low computation time of the velocity search further indicates its effectiveness in computing an informative admissible heuristic for the motion primitive search.
Finally, constraint checking with uniform samples every 0.1 s took an average of only 0.3 ms, demonstrating the method's high efficiency.

\subsection{Comparison with State-of-the-Art}
We compared STITCHER to two state-of-the-art algorithms: GCOPTER \cite{Wang22:Geometrically} and FASTER \cite{Tordesillas22:FASTER}. 
GCOPTER performs an online optimization by incorporating state constraints into the cost functional and running a quasi-newton method, while FASTER solves a mixed integer quadratic program online. 
Both algorithms rely on a sparse geometric path for safe corridor formation, but do not enforce final trajectories to pass through waypoints. 
We evaluate each planner by time (planning time versus execution time) and failure (constraint violation or incomplete/no path found).
For the Perlin Noise environment, the path lengths were 12.5 m, 30 m, and 55 m, and the path lengths for Willow Garage were 20 m, 25 m, and 30 m with $N =$ 4, 6, 8 waypoints for both environments.

\subsubsection{Time Analysis}
\Cref{tab:lqmt_SOA_comp} compares the planning times and the trajectory execution times of each planner.
STITCHER's planning times are faster than those measured for GCOPTER and FASTER in each test, with an average of 6x and 200x speed up, respectively.
In some cases GCOPTER and FASTER achieved lower execution times, but this was found to be a result of waypoints being treated as soft constraints, i.e., the trajectory is only required to pass nearby a waypoint rather than through it, as well as the chosen resolution of state samples in STITCHER. 

\begin{table}[t]
\caption{State-of-the-Art Comparison Time Analysis}
\vskip -0.1in  
\label{tab:lqmt_SOA_comp}
\begin{center}
\begin{tabular}{c|c|c|c|c|c|c|c}
\hline
\multirow{2}{*}{Map}  & \multirow{2}{*}{$N$} & \multicolumn{3}{|c|}{Planning time (ms)} &\multicolumn{3}{|c}{Execution time (s)}\\
& & \cite{Tordesillas22:FASTER} & \cite{Wang22:Geometrically} &  Ours &  \cite{Tordesillas22:FASTER} & \cite{Wang22:Geometrically} &  Ours  \\
\hline
\multirow{3}{*}{\parbox{0.9cm}{\centering Perlin Noise}} & 4 &  112 & 26.7 & \textbf{3.21} & 3.14 & 3.51 & 3.13\\
& 6  & 233 & 74.7 & \textbf{9.73} & 4.42 & 4.42 & 5.19\\
& 8 & 627 & 121 & \textbf{15.9} & 7.63 & 7.35 & 9.12 \\
\hline
\multirow{3}{*}{\parbox{0.9cm}{\centering Willow Garage}} & 4 & 499 & 30.7 & \textbf{8.24} & 4.63 & 4.38 & 4.43 \\
& 6 & 4030 & 50.5 & \textbf{18.2} & 7.46 & 5.84 & 6.20 \\
& 8 & 23300 & 120 & \textbf{25.9} & 14.7 & FAILED & 7.96\\
\hline
\end{tabular}
\end{center}
\vskip -0.15in  
\end{table}

\begin{table}[t!]
\caption{State-of-the-Art Comparison Failure Analysis}
\vskip -0.1in  
\label{tab:SOA_failure}
\begin{center}
\begin{tabular}{c|c|c|c|c|c|c|c|c|c}
\hline
\multirow{2}{*}{Map} & \multicolumn{3}{|c|}{\parbox{1.8cm}{\vspace{0.05cm}\centering No Path Found (\%)\vspace{0.05cm}}}  & \multicolumn{3}{|c|}{\parbox{1.75cm}{\vspace{0.05cm}\centering Const.\,Violation (\%)\vspace{0.05cm}}} & \multicolumn{3}{|c}{\parbox{1.25cm}{\vspace{0.05cm}\centering Collisions (\%)\vspace{0.05cm}}} \\
& \cite{Tordesillas22:FASTER} & \cite{Wang22:Geometrically} &  Ours &  \cite{Tordesillas22:FASTER} & \cite{Wang22:Geometrically} &  Ours &
\cite{Tordesillas22:FASTER} & \cite{Wang22:Geometrically} &  Ours \\
\hline
\parbox{0.82cm}{\vspace{0.08cm} \centering Perlin Noise\vspace{0.08cm}} & 0 & 0 & \textbf{0} & 6 & 0 & \textbf{0} & 4  & 4 & \textbf{0} \\
\hline
\parbox{0.82cm}{\vspace{0.08cm}\centering Willow Garage\vspace{0.08cm}} & 18 & 2 & \textbf{2} & 8 & 0 & \textbf{0} & 0 & 24 & \textbf{0} \\
\hline
\end{tabular}
\end{center}
\end{table}

\begin{figure}[t]
 \centering
 \includegraphics[width=\columnwidth]{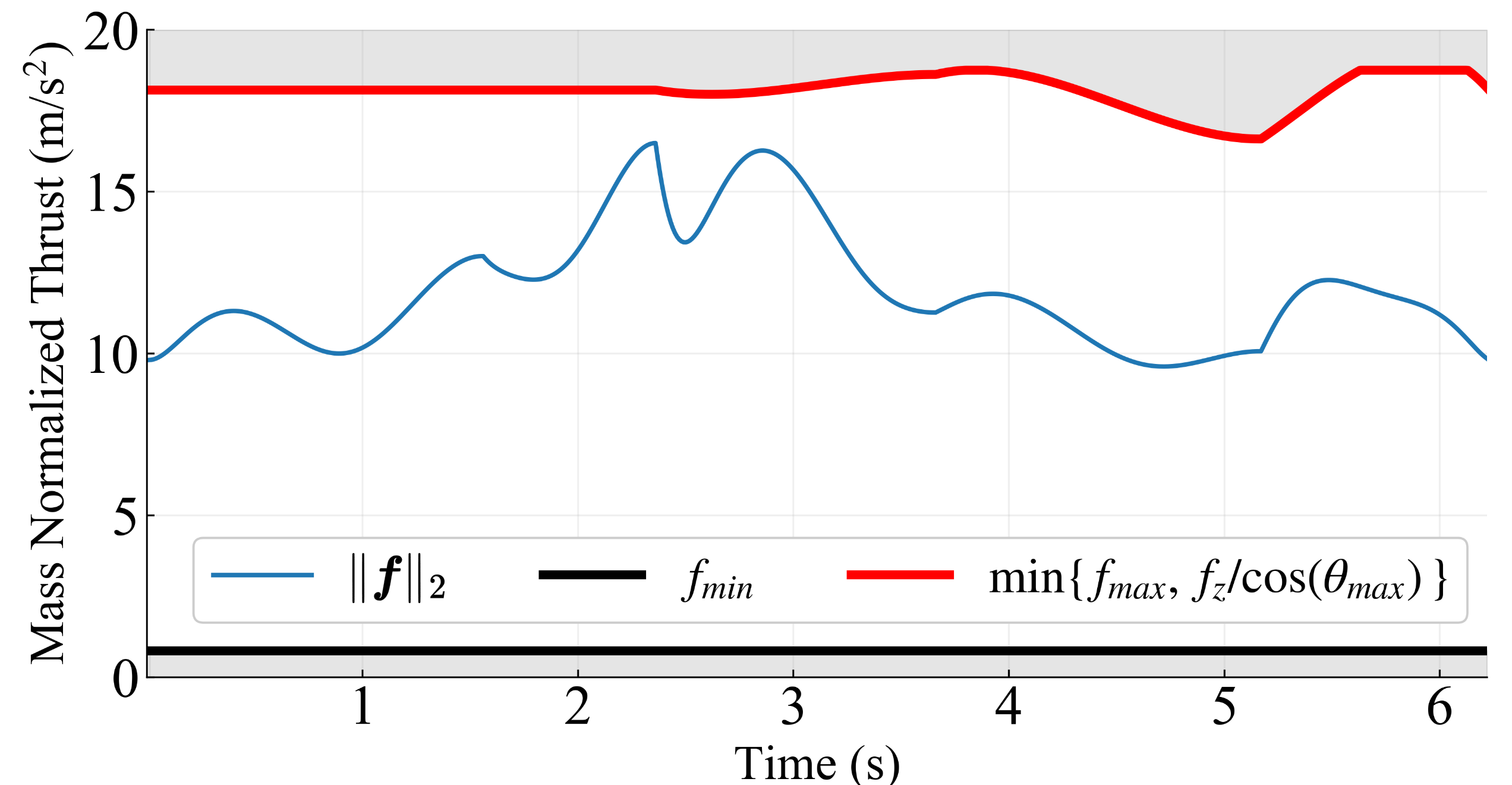}
 \caption{Mass normalized thrust plot depicting constraint satisfaction.}
\label{fig:mass_norm_thrust}
\vskip -0.1in    
\end{figure}

\subsubsection{Failure Analysis}
A Monte Carlo simulation composed of 50 test cases was performed to evaluate the different modes of failure experienced by each planner. 
\Cref{tab:SOA_failure} compares the rate at which each planner does not find a path, generates a trajectory violating constraints or generates a trajectory in collision. 
The ``No Path Found" metric includes a numerical solver not returning a solution, or if the solution does not reach the goal. 
Across all test cases, STITCHER's motion primitive graph disconnects only once, achieving the lowest rate of failure among the tested planners. 
In the Willow Garage environment, where narrow corridors make collisions more likely, the number of failed solutions by FASTER and collisions by GCOPTER significantly increases. 
In contrast, STITCHER never violates constraints (state, control, or obstacles) because all constraints are strictly enforced. 
As an example, \cref{fig:mass_norm_thrust} is a representative mass-normalized thrust profile generated by STITCHER, which remains within the valid limits.

\section{CONCLUSION}
In this work, we presented STITCHER, a motion primitive search planning algorithm that utilizes a novel three-stage planning architecture to design trajectories in real-time over long distances.
We proved the search graph is finite, and the proposed search heuristic is admissible, so STITCHER is guaranteed to i) have \textit{a priori} bounded time and memory complexity and ii) generate optimal trajectories with respect to the sampled set of states. 
Real-time search speeds were achieved through our novel heuristic crafting technique, greedy graph pre-processing method, and non-conservative constraint and collision checking procedure.
Our simulation study showed the effectiveness of the proposed heuristic, the average computation of the components that make up STITCHER, and the satisfaction of complex actuator constraints.
Critically, STITCHER was shown to consistently generate trajectories faster than two state-of-the-art optimization-based planners, with improvements of up to two orders of magnitude for computation time. 
Future work includes developing a receding horizon planning framework, using learning for motion primitive generation and heuristic construction, and hardware/field experiments.

\addtolength{\textheight}{0cm}   



\vskip 0.1in  
\noindent \textbf{Acknowledgments} The authors would like to thank Grace Kwak and Ryu Adams for implementation support.


\bibliographystyle{IEEEtran}
\bibliography{ref}

\end{document}